\newcommand{\tvar}{t}
\newcommand{\tdummy}{\tau}
\newcommand{\R}{\mathbb{R}}
\newcommand{\ctrl}{u}
\newcommand{\cfunc}{u(\cdot)}
\newcommand{\cset}{\mathcal{U}}
\newcommand{\state}{x}
\newcommand{\traj}{\xi} 
\newcommand{\dyn}{f} 
\newcommand{\targetfunc}{l}
\newcommand{\targetset}{\mathcal{L}}
\newcommand{\safeset}{\mathcal{S}}
\newcommand{\BRT}{\text{BRT}}
\newcommand{\policy}{\pi}
\newcommand{\costfunctional}{J}
\newcommand{\vfunc}{V}
\newcommand{\trajstandard}{\traj_{\state,\tvar}^{\ctrl}}
\newcommand{\trajinittime}{\traj_{\state,0}^{\ctrl}}
\newcommand{\veh}{Q}
\newcommand{\horizon}{T}
\newcommand{\metName}{\text{DeepReach}}
\newcommand{\approachName}{\text{Scenario Optimization Verification}}
\newcommand{\safetyMetric}{\delta_{\Tilde{\vfunc},\Tilde{\policy}}}
\newcommand{\costFunction}{J_{\Tilde{\policy}}}
\newtheorem{remark}{Remark}
\newtheorem{theorem}{Theorem}
\newtheorem{lemma}{Lemma}
\definecolor{planning_color}{RGB}{69, 174, 254}    
\definecolor{prediction_color}{RGB}{255, 116, 190}
\newcommand{\example}[1]%
{
\textbf{Running example:}
\textit{#1}
}
\begin{document}
\title{Generating Formal Safety Assurances for High-Dimensional Reachability}
\author{Albert Lin$^{1}$ and Somil Bansal$^{2}$
\thanks{$^{1}$Author is with CS at Princeton: \{aklin\}@princeton.edu.
$^{2}$Author is with ECE at University of Southern California: \{somilban\}@usc.edu.
Project Website: \url{https://sia-lab-git.github.io/DeepReach_Verification/}.
This research is supported in part by the NVIDIA Academic Hardware Grant Program and the NSF REU Program at USC in robotics and autonomous systems.
}
}
%
%
\maketitle
%

\begin{abstract}
Providing formal safety and performance guarantees for autonomous systems is becoming increasingly important. Hamilton-Jacobi (HJ) reachability analysis is a popular formal verification tool for providing these guarantees, since it can handle general nonlinear system dynamics, bounded adversarial system disturbances, and state and input constraints. 
However, it involves solving a PDE, whose computational and memory complexity scales exponentially with respect to the state dimensionality, making its direct use on large-scale systems intractable.
A recently proposed method called \metName{} overcomes this challenge by leveraging a sinusoidal neural PDE solver for high-dimensional reachability problems, whose computational requirements scale with the complexity of the underlying reachable tube rather than the state space dimension.
Unfortunately, neural networks can make errors and thus the computed solution may not be safe, which falls short of achieving our overarching goal to provide formal safety assurances. 
In this work, we propose a method to compute an error bound for the DeepReach solution. 
This error bound can then be used for reachable tube correction, resulting in a safe approximation of the true reachable tube.
We also propose a scenario-based optimization approach to compute a probabilistic bound on this error correction for general nonlinear dynamical systems.
We demonstrate the efficacy of the proposed approach in obtaining probabilistically safe reachable tubes for high-dimensional rocket-landing and multi-vehicle collision-avoidance problems.
\end{abstract}

\IEEEpeerreviewmaketitle

\section{Introduction}
It is becoming increasingly important that we can design provably safe controllers for autonomous systems.
Hamilton-Jacobi (HJ) reachability analysis provides a powerful framework to design such controllers for general nonlinear dynamical systems \cite{lygeros2004reachability, mitchell2005time}.
In reachability analysis, the safe states for the system are characterized through the \textit{Backward Reachable Tube (BRT)} of the system.
This is the set of states from which trajectories will eventually reach some given target set despite the best control effort.
Thus, if the target set represents the set of undesirable states, the BRT represents unsafe states for the system and should be avoided. 
Along with the BRT, reachability analysis also provides a safety controller to keep the system outside the BRT. 

Traditionally, the BRT computation in HJ reachability is formulated as an optimal control problem.
The BRT can then be obtained as a sub-zero level solution of the corresponding value function.
Obtaining the value function requires solving a partial differential equation (PDE) over a state-space grid, resulting in an exponentially scaling computation complexity with the number of states \cite{bansal2017hamilton}.
To overcome this challenge, a variety of solutions have been proposed that trade off between the class of dynamics
they can handle, the approximation quality of the BRT,
and the required computation.
These include specialized methods for linear and affine dynamics \cite{10.1007/3-540-64358-3_38, Frehse2011, Kurzhanski00, Kurzhanski02, Maidens13, girard2005reachability, althoff2010computing, bak2019numerical, Nilsson2016}, polynominal dynamics \cite{doi:10.1177/0278364914528059, majumdar2017funnel, Dreossi16, henrion2014convex}, monotonic dynamics \cite{coogan2015efficient}, and convex dynamics \cite{chow2017algorithm}
(see \cite{bansal2017hamilton, bansal2021deepreach} for a survey).
Owing to the success of deep learning, there has also been a surge of interest in approximating high-dimensional BRTs \cite{rubies2019classification, fisac2019bridging, djeridane2006neural, niarchos2006neural, darbon2020overcoming} and optimal controllers \cite{onken2022neural} through deep neural networks (DNN).
Building upon this line of work, the authors in \cite{bansal2021deepreach} have proposed DeepReach -- a toolbox that leverages recent advances in neural implicit representations and neural PDE solvers to compute a value function and a safety controller for high-dimensional systems.
Compared to the aforementioned methods, DeepReach can handle general nonlinear dynamics, the presence of exogenous disturbances, as well as state and input constraints during the BRT computation.
However, even if we can now obtain an approximate BRT for high-dimensional systems, it is limited in usefulness by its approximate nature. 
In particular, the learned BRT can be overly optimistic; thus, no formal safety guarantees can be provided on the obtained BRT and controller. 

In this work, our goal is to compute a BRT and a safety controller for high-dimensional systems with provable safety guarantees.
Specifically, we build upon DeepReach and propose a verification method to provide safety assurances on the DeepReach solution.
The key insight of our work is to rely on the consistency between the learned value function and the implicit safety controller induced by this value function to compute a \textit{uniform correction bound} for the value function.
Essentially, if a particular state is outside the BRT (i.e., ``safe'' as per the learned value function), then starting from this state, the corresponding safety controller should keep the system trajectory outside the target set at all times (i.e., the state should also be ``safe'' under the prescribed controller).
Otherwise, we cannot ensure safety of this state, and the value function should be corrected such that this state is inside the BRT.
Once all such states have been added to the learned BRT, we have obtained a provably safe approximation of the true BRT.

We show that the computation of the correction bound can be posed as an optimization problem. However, in general, it is challenging to tractably compute this bound, since the learned value function can be highly nonlinear and hard to optimize.
We propose a scenario-based optimization method to compute this correction. 
Scenario optimization is a sampling-based method to solve semi-infinite optimization problems, and has been widely used for system and control design \cite{ghosh2019new, campi2011sampling, calafiore2006scenario, campi2009scenario}.
The proposed method is not restricted to a specific class of
system dynamics or value functions. 
Given that the value function correction is obtained via randomization and, hence,
is a random quantity, we provide probabilistic guarantees on the safety of the recovered BRT.
However, this confidence is a design parameter and can be chosen as close to 1 as desired (within a simulation budget).

To summarize, the key contributions of this paper are:
\begin{itemize}
    \item an error correction mechanism for DeepReach that results in a provably safe BRT and safety controller for general dynamical systems;
    \item a practical method to compute a probabilistic bound on this error correction that is not restricted to a specific class of systems, resulting in a tractable computation of probabilistically safe reachable tubes; and
    \item a demonstration of the proposed approach for various dynamical systems, inspired by rocket landing and multi-vehicle collision avoidance problems.
\end{itemize}
\section{Problem Setup} \label{sec:problem_setup}
Consider a dynamical system with state $\state \in X \subseteq \R^n$, control $\ctrl \in \cset$, and dynamics $\dot{\state} = \dyn(\state, \ctrl)$ governing how $\state$ evolves over time until a final time $\horizon$. 
Let $\trajstandard(\tdummy)$ denote the state achieved at time $\tdummy \in [\tvar, \horizon]$ by starting at initial state $\state$ and time $\tvar$ and applying control $\cfunc$ over $[\tvar,\tau]$. 
Let $\targetset$ represent a target set of states that the agent wants to either reach (e.g. goal states) or entirely avoid (e.g. obstacles).
\vspace{0.2em}

\noindent \textit{\textbf{Running example: Dubins3D Avoid.}} As a running example, consider a simple low-dimensional system in the literature known as Dubins3D. It involves a car with position $(p_x, p_y)$, heading $\theta$, velocity $v$, and steering control $u_1 \in [u_{\min}, u_{\max}]$. The car's state $x$ evolves according to
%
\begin{align*}
    \dot{p_x} = v\cos{\theta}, \quad \dot{p_y} = v\sin{\theta}, \quad
    \dot{\theta} = u_1
\end{align*}
It wants to avoid a circle with radius $R$ centered at the origin. Thus, we define the target set $\targetset$ of this system to be:
\begin{align*}
    \targetset = \{x: \sqrt{p_x^2 + p_y^2} \le R\}
\end{align*}
We use Dubins3D as a benchmark because it is tractable for traditional reachability methods to compute a solution for, and we can thus compare our results with a ground truth. 
In Section \ref{sec:examples}, we present high-dimensional problems which traditional methods struggle with.

In this setting, we are interested in computing the system's initial-time Backward Reachable Tube, which we denote as $\BRT$. 
We define $\BRT$ as the set of all initial states in $X$ from which the agent will eventually reach $\targetset$ within the time horizon $[0,\horizon]$, despite best control efforts:
\begin{equation}
\label{eq:avoidBRT}
\begin{aligned}
\small{
\BRT = \{\state: \state\in X, \forall \ctrl(\cdot), \exists \tdummy \in [0, \horizon], \trajinittime(\tdummy) \in \targetset\}}
\end{aligned}
\end{equation}
When $\targetset$ represents unsafe states for the system, staying outside of $\BRT$ is desirable. 
When $\targetset$ instead represents the states that the agent wants to reach (e.g., a goal set), $\BRT$ is defined as the set of all initial states in $X$ from which the agent, acting optimally, can eventually reach $\targetset$ within $[0,\horizon]$. Thus, staying within $\BRT$ is desirable.

Our goal in this work is to compute a provable approximation of the safe set. Specifically, we want to compute an approximation $\hat{\safeset}$ such that $\hat{\safeset}\subseteq \BRT^C$ ($\hat{\safeset}\subseteq \BRT$ when $\targetset$ represents goal states). 
We are especially interested in settings where the system is high-dimensional, with which current state-of-the-art reachability methods struggle.

\section{Background: Hamilton-Jacobi (HJ) Reachability and DeepReach} \label{sec:hj_reachability}
In this work, we build upon Hamilton-Jacobi reachability analysis to compute an approximation of the safe set. Here, we provide a quick overview of Hamilton-Jacobi reachability analysis and a specific toolbox to compute high-dimensional reachable sets, DeepReach.

\subsection{Hamilton-Jacobi (HJ) Reachability.} 
In HJ reachabilty, computing $\BRT$ is formulated as an optimal control problem. We will explain it in the context of $\targetset$ being a set of undesirable states. In the end, we will comment on the case where $\targetset$ is a set of desirable states. 

To compute BRT, we first define a target function $\targetfunc(\state)$ such that the sub-zero level of $\targetfunc(\state)$ yields $\targetset$:
\vspace{-0.25em}
\begin{equation}
\small{
    \targetset = \{\state: \targetfunc(\state) \le 0\}}
    \vspace{-0.25em}
\end{equation}
$\targetfunc(\state)$ is commonly a signed distance function to $\targetset$. For example, we can choose $\targetfunc(\state) = \sqrt{p_x^2+p_y^2}-R$ for our Dubins3D Avoid running example.

Next, we define the cost function of a state corresponding to some policy $\cfunc$ to be the minimum of $\targetfunc(\state)$ over its trajectory:
\vspace{-0.75em}
\begin{equation}
    \label{eq:costfunctional}
    \small{
    \costfunctional_{\cfunc}(\state,\tvar) = \min_{\tdummy \in [\tvar, \horizon]} \targetfunc(\trajstandard(\tdummy)).}
    \vspace{-0.25em}
\end{equation}
\normalsize
Since the system wants to avoid $\targetset$, our goal is to maximize $\costfunctional_{\cfunc}(\state,\tvar)$.  
Thus, the value function corresponding to this optimal control problem is:
\vspace{-0.25em}
\begin{equation}
\small{
    \label{eq:valuefunc}
    \vfunc(\state,\tvar) = \sup_{\cfunc} \costfunctional_{\cfunc}(\state,\tvar).}
    \vspace{-0.25em}
\end{equation}
\normalsize

By defining our optimal control problem in this way, we can recover $\BRT$ using the value function. In particular, the value function being sub-zero implies that the target function is sub-zero somewhere along the optimal trajectory, or in other words, that the system has reached $\targetset$. Thus, $\BRT$ is given as the sub-zero level set of the value function at the initial time:

\small
\vspace{-1.5em}
\begin{equation}
    \label{eq:BRT_from_valfunc}
    \BRT = \{\state: \state\in X, \vfunc(\state,0) \le 0 \}
\end{equation}
\vspace{-0.25em}
\normalsize
The value function in Equation \eqref{eq:valuefunc} can be computed using dynamic programming, resulting in the following final value Hamilton-Jacobi-Bellman Variational Inequality (HJB-VI):

%
\small
\vspace{-0.5em}
\begin{equation}
\begin{aligned}
    \label{eq:HJBVI}
    \min\Big\{D_\tvar \vfunc(\state,\tvar)+ H(\state,\tvar), \targetfunc(\state)-\vfunc(\state,\tvar)\Big\} = 0,
    \end{aligned}
    \vspace{-0.25em}
\end{equation}
\normalsize
with the terminal value function $\vfunc(\state,\horizon) = \targetfunc(\state)$. 
$D_\tvar$ and $\nabla$ represent the time and spatial gradients of the value function. 
$H$ is the Hamiltonian that encodes the role of dynamics and the optimal control.

\small
\vspace{-0.5em}
\begin{equation}
    \label{eq:ham}
    \begin{aligned}
    H(\state,\tvar) = \max_\ctrl & \langle \nabla \vfunc(\state,\tvar), \dyn(\state,\ctrl)\rangle.
        \end{aligned}
    \vspace{-0.25em}
\end{equation}
\normalsize

The value function in Equation \eqref{eq:valuefunc} induces the optimal safety controller:

\small
\vspace{-0.5em}
\begin{equation}
    \label{eq:opt_ctrl}
    \begin{aligned}
    u^*(\state,\tvar) = \underset{\ctrl}{\arg\max} & \langle \nabla \vfunc(\state,\tvar), \dyn(\state,\ctrl)\rangle.
        \end{aligned}
    \vspace{-0.25em}
\end{equation}
\normalsize
Intuitively, the safety controller aligns the system dynamics in the direction of the value function's gradient, thus steering the system towards higher-value states.

We have just explained the case where $\targetset$ represents a set of undesirable states. When the system instead wants to reach $\targetset$, an infimum is used instead of a supremum in Equation $\eqref{eq:valuefunc}$.
The control wants to reach $\targetset$, hence there is a minimum instead of a maximum in Equation \eqref{eq:ham} and Equation \eqref{eq:opt_ctrl}.

Traditionally, the value function is computed by solving the HJB-VI over a discretized grid in the state space. Unfortunately, doing so involves computation whose memory and time complexity scales exponentially with respect to the system dimensionality, making these methods practically intractable for high-dimensional systems, such as those beyond 5D. Fortunately, a recent deep learning approach, \metName{}, has been proposed to enable Hamilton-Jacobi reachability for high-dimensional systems. 

\subsection{\metName{} Approximate Solutions.} 
Instead of solving the HJB-VI over a grid, DeepReach represents the value function as a sinusoidal deep neural network (DNN) to learn a parameterized approximation of the value function \cite{bansal2021deepreach}. Thus, memory and complexity requirements for training scale with the value function complexity rather than the grid resolution.
To train the DNN, DeepReach uses self-supervision on the HJB-VI itself. 
Ultimately, it takes as input a state $\state$ and time $\tvar$, and it outputs a learned value $\Tilde{\vfunc}(\state,\tvar)$.
$\Tilde{\vfunc}(\state,\tvar)$ also induces a corresponding policy $\Tilde{\policy}(\state,\tvar)$.
We refer interested readers to \cite{bansal2021deepreach} for further details.

The learned $\Tilde{\vfunc}(\state,\tvar)$ from DeepReach can be used to obtain a BRT as in Equation \eqref{eq:BRT_from_valfunc}, but it will only be as accurate as $\Tilde{\vfunc}(\state,\tvar)$. Unfortunately, like any learning method, $\Tilde{\vfunc}(\state,\tvar)$ can deviate substantially from the true $\vfunc(\state,\tvar)$. 

Our goal is to recover the biggest safe set $\hat{\safeset}$ using $\Tilde{\vfunc}(\state,\tvar)$ that we can (probabilistically) guarantee to be fully contained within the true safe set.
Although we work with \metName{} solutions in particular for this problem setup, our proposed approach in Section \ref{sec:approach} can verify any general $\Tilde{\vfunc}(\state,\tvar)$ and $\Tilde{\policy}(\state,\tvar)$, regardless of whether \metName{}, a level-set method, or some other tool is used to obtain them.
 
\section{Approach}\label{sec:approach}
Our approach is to apply a minimal error correction to $\Tilde{\vfunc}(\state,\tvar)$ such that the corrected value function can be used to extract a safe set.
We will explain our approach in the context of $\targetset$ being a set of undesirable states. In the end of each subsection, we will comment on the case where $\targetset$ represents a set of desirable states. In Section \ref{subsec:errormetric}, we propose an error metric for correcting the value function. In Section \ref{subsec:scenopt}, we propose a practical method to compute a probabilistic bound on this error metric through scenario optimization.

\subsection{Error Metric}\label{subsec:errormetric}
We propose a new error metric $\safetyMetric$ for the value function correction. It is defined as the maximum learned value of an empirically unsafe initial state under the induced policy $\Tilde{\policy}$:
\vspace{-0.25em}
\begin{equation}
\label{eq:reachSafetyMetric}
\begin{aligned}
\small{
\safetyMetric \coloneqq \max_{x\in X}\{\Tilde{\vfunc}(\state,0): \costFunction(\state,0) \le 0\}},
\end{aligned}
\end{equation}
where $\costFunction(\state,0)$ is the cost function associated with the trajectory obtained by using the policy $\Tilde{\policy}(\state,\tvar)$ from an initial state $\state$ and initial time $\tvar=0$ until $\horizon$ (see Equation \eqref{eq:costfunctional}).
Here on, we use $\delta$ as a shorthand for $\safetyMetric$ for brevity purposes.

Intuitively, $\delta$ finds the tightest level of the learned value function that separates the states that are safe under the induced policy $\Tilde{\policy}(\state,\tvar)$ from the ones that are not.
Thus, any initial state within the super-$\delta$ level set of $\Tilde{\vfunc}(\state,0)$ is guaranteed to be safe under the (possibly sub-optimal) policy $\Tilde{\policy}(\state,\tvar)$. 
Lemma \ref{lemma:recoveredsafeset} formalizes this claim.
\begin{lemma}
\label{lemma:recoveredsafeset}
If we compute $\Tilde{\safeset}$ as:
\vspace{-0.25em}
\begin{equation}
\label{eq:reachrecoveredsafeset}
\begin{aligned}
\small{
\Tilde{\safeset} = \{\state\in X: \Tilde{\vfunc}(\state,0) > \delta\}}
\end{aligned}
\vspace{-0.25em}
\end{equation}
then $\Tilde{\safeset} \subseteq \BRT^C$.
\end{lemma}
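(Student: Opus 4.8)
The plan is to prove the contrapositive-flavored claim that every state in $\Tilde{\safeset}$ has strictly positive cost under the induced policy, and then to translate positive cost into membership in $\BRT^C$ through the definition of the backward reachable tube in \eqref{eq:avoidBRT}. First I would fix an arbitrary $\state \in \Tilde{\safeset}$, so that $\Tilde{\vfunc}(\state,0) > \delta$ by \eqref{eq:reachrecoveredsafeset}. Recall from \eqref{eq:avoidBRT} that $\BRT$ consists exactly of those states for which \emph{every} control eventually drives the trajectory into $\targetset$; hence to certify $\state \in \BRT^C$ it suffices to exhibit a single admissible control under which the trajectory from $\state$ never enters $\targetset$.

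The key step is to show $\costFunction(\state,0) > 0$ directly from the definition of $\delta$ in \eqref{eq:reachSafetyMetric}. I would argue by contradiction: if instead $\costFunction(\state,0) \le 0$, then $\state$ lies in the feasible set $\{x \in X : \costFunction(\state,0) \le 0\}$ over which the maximum defining $\delta$ is taken, so $\Tilde{\vfunc}(\state,0) \le \delta$ by definition of that maximum. This contradicts $\Tilde{\vfunc}(\state,0) > \delta$, forcing $\costFunction(\state,0) > 0$. Note that this uses $\delta$ only as an upper bound over the feasible set, so the argument is insensitive to whether the maximum is actually attained.

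It then remains to unpack the meaning of $\costFunction(\state,0) > 0$. Instantiating \eqref{eq:costfunctional} for the policy $\Tilde{\policy}$ gives $\costFunction(\state,0) = \min_{\tdummy \in [0,\horizon]} \targetfunc(\traj(\tdummy))$, where $\traj$ is the closed-loop trajectory rolled out from $\state$ at time $0$ under $\Tilde{\policy}$. Positivity of this minimum yields $\targetfunc(\traj(\tdummy)) > 0$, and hence $\traj(\tdummy) \notin \targetset$, for every $\tdummy \in [0,\horizon]$. Thus the control realized by $\Tilde{\policy}$ is precisely the avoiding control we sought: $\state$ violates the ``for all controls'' condition in \eqref{eq:avoidBRT}, so $\state \notin \BRT$. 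Since $\state$ was arbitrary, $\Tilde{\safeset} \subseteq \BRT^C$.

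The step I expect to demand the most care is this final translation. The definition of $\BRT$ quantifies over all controls, so placing $\state$ in the complement requires producing one concrete avoiding control, and the only candidate in hand is the rollout of the induced policy $\Tilde{\policy}$ itself. The whole argument therefore hinges on $\costFunction$ being defined through the closed-loop trajectory of $\Tilde{\policy}$, so that $\costFunction(\state,0) > 0$ genuinely certifies an admissible control achieving target avoidance rather than merely an abstract lower bound. For the goal-set variant the same contradiction argument applies with the inequalities and the optimization sense flipped, yielding the analogous containment $\Tilde{\safeset} \subseteq \BRT$.
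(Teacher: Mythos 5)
Your proof is correct, and its core step coincides with the paper's: both derive a contradiction between $\Tilde{\vfunc}(\state,0) > \delta$ and $\costFunction(\state,0) \le 0$, observing that any state with non-positive closed-loop cost lies in the feasible set of the maximum defining $\delta$ in \eqref{eq:reachSafetyMetric}, which forces $\Tilde{\vfunc}(\state,0) \le \delta$. Where you diverge is in how membership in $\BRT^C$ is certified. The paper routes through the optimal value function: from $\state \in \BRT$ it invokes the level-set characterization \eqref{eq:BRT_from_valfunc} to get $\vfunc(\state,0) \le 0$, then uses the comparison $\costFunction(\state,\tvar) \le \vfunc(\state,\tvar)$ (the induced policy's cost is dominated by the supremum in \eqref{eq:valuefunc}) to conclude $\costFunction(\state,0) \le 0$ and reach the same contradiction. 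You instead bypass $\vfunc$ entirely: from $\costFunction(\state,0) > 0$ you unpack \eqref{eq:costfunctional} to conclude that the closed-loop trajectory under $\Tilde{\policy}$ never enters $\targetset$, and you exhibit that rollout as the single admissible control needed to negate the ``for all controls'' quantifier in the trajectory-based definition \eqref{eq:avoidBRT}. Your route is more elementary and self-contained---it needs neither the equivalence \eqref{eq:BRT_from_valfunc} between the BRT and the sub-zero level set of $\vfunc$ (itself a dynamic-programming fact) nor the inequality $\costFunction \le \vfunc$---and it makes explicit the admissibility point that both arguments implicitly require, namely that the rollout of $\Tilde{\policy}$ induces a bona fide open-loop control. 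What the paper's route buys in exchange is economy within the larger development: the inequality $\costFunction \le \vfunc$ that it isolates is reused verbatim in the proof of Theorem \ref{theorem:scenarioOptimization}, so establishing it once serves both results. Your remark that the argument uses $\delta$ only as an upper bound over the feasible set, and is therefore insensitive to whether the maximum in \eqref{eq:reachSafetyMetric} is attained, is a careful touch that the paper's proof glosses over.
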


The proof for Lemma \ref{lemma:recoveredsafeset} is in Appendix\ref{appendix:lemma_recoveredsafeset} of the extended version of this article \cite{lin2022generating}.
Intuitively, Lemma \ref{lemma:recoveredsafeset} states that if we could exactly compute $\delta$, then we can recover a $\Tilde{\safeset}$ that is guaranteed to be a subset of the true safe set. 
Furthermore, it is clear that, by definition, $\delta$ is the smallest (uniform) value adjustment required on $\Tilde{\vfunc}(\state,0)$ to guarantee its safety. 
Thus, $\Tilde{\safeset}$ is the largest safe set we can recover from the learned value function by such a uniform error correction procedure.

Unfortunately, computing $\delta$ is a challenging optimization problem, since both the value function and the cost function in the metric definition \eqref{eq:reachSafetyMetric} are typically non-convex functions of $x$. 
In the next section, we propose a scenario-based optimization approach to compute a high-confidence bound on $\delta$ and thereby generate a high-confidence safe set.

\begin{remark}
We assume there exists an unsafe state for the metric definition \eqref{eq:reachSafetyMetric}. Otherwise, define $\safetyMetric$ trivially as $-\infty$.
\end{remark}

So far, we have discussed an error correction metric for the case where $\targetset$ represents a set of undesirable states. 
When $\targetset$ represents a set of desirable states, we use a minimum instead of a maximum and flip the cost inequality in the metric definition \eqref{eq:reachSafetyMetric}.
To recover the safe set, we extract the sub-$\delta$ level set of the value function in Equation \eqref{eq:reachrecoveredsafeset}.

\subsection{\approachName{} Method}\label{subsec:scenopt}
We will compute a high-confidence bound on $\delta$ by utilizing a random sampling procedure referred to as \textit{scenario optimization} in the systems and control design literature \cite{campi2009scenario}. The \approachName{} Method to compute an approximation $\hat{\delta}$ is summarized in Algorithm \ref{alg:scenarioOptimization}.

\setlength{\textfloatsep}{0pt}
\begin{algorithm}
\caption{\approachName{}}
\label{alg:scenarioOptimization}
\begin{algorithmic}[1]
\REQUIRE $X$, $N$, $M$, $\Tilde{\vfunc}(\state,0)$, $\costFunction(\state,0)$
\STATE $\delta_0 \gets -\infty$
\FOR{$i= 0, 1, \ldots, M-1$} 
    \STATE $\mathcal{D}_i \gets$ Sample $N$ states IID from $\{\state:\state\in X, \Tilde{\vfunc}(\state,0) > \delta_i\}$ \label{line:sample}
    \IF{$\exists \state \in \mathcal{D}_i : \costFunction(\state,0) \le 0$}
        \STATE $\delta_{i} \gets \max_{\state \in \mathcal{D}_i}\{\Tilde{\vfunc}(\state,0): \costFunction(\state,0) \le 0\}$ \label{line:assign}
    \ELSE
        \STATE \textbf{break}
    \ENDIF
\ENDFOR
\RETURN $\hat{\delta} := \delta_i$
\end{algorithmic}
\end{algorithm}

At a high-level, Algorithm \ref{alg:scenarioOptimization} computes a converging sequence of $\delta_i$ that approximates $\delta$ via random sampling until we no longer find any safety violations or reach a maximum number of iterations, $M$.
Specifically, at each iteration $i$, we randomly sample $N$ initial states within the super-$\delta_i$ level set of $\Tilde{\vfunc}$ (Line \ref{line:sample}) using rejection sampling and compute the costs of associated trajectories $\costFunction(\state,0)$ under $\Tilde{\policy}(\state,\tvar)$ as the controller (Equation \eqref{eq:costfunctional}). We next compute the maximum learned value among the states that violate safety and use that as the new estimate of $\delta$ (Line \ref{line:assign})), essentially discarding a level region of $\Tilde{\vfunc}(\state, 0)$ that is empirically unsafe under $\Tilde{\policy}$.
Thus, with each iteration, we obtain a tighter approximation of $\delta$ and terminate when no more safety violations are found or a maximum number of iterations is achieved. 

While Algorithm \ref{alg:scenarioOptimization} is simple to understand and execute, scenario optimization provides a formal guarantee for bounding $\delta$ with $\hat{\delta}$ when the algorithm terminates prior to reaching the maximum number of iterations.
Thus, we can use the approximated $\hat{\delta}$ instead of $\delta$ to compute an approximate safe set $\hat{\safeset}$ similar to the one defined in Lemma \ref{lemma:recoveredsafeset}:
\vspace{-0.25em}
\begin{equation}
\label{eq:reachapproximatelyrecoveredsafeset}
\begin{aligned}
\small{
\hat{\safeset} = \{\state: \state\in X, \Tilde{\vfunc}(\state,0) > \hat{\delta}\}}
\end{aligned}
\vspace{-0.25em}
\end{equation}
Crucially, we can make a formal probabilistic safety guarantee for $\hat{\safeset}$. This is summarized in Theorem \ref{theorem:scenarioOptimization} and proven in Appendix\ref{appendix:theorem_scenopt} of the extended version of this article \cite{lin2022generating}.
\begin{theorem}[Scenario Optimization Verification Theorem]
\label{theorem:scenarioOptimization}
Select a violation parameter $\epsilon \in (0, 1)$ and a confidence parameter $\beta \in (0, 1)$. Pick $N$ such that
\vspace{-0.25em}
\begin{equation} \label{eq:prescribedN}
\begin{aligned}
\small{
N \ge \frac{2}{\epsilon}\left(\ln{\frac{1}{\beta}}+1\right)}
\end{aligned}
\vspace{-0.25em}
\end{equation}
Suppose Algorithm \ref{alg:scenarioOptimization} converges, then with probability at least $1-\beta$, the recovered safe set $\hat{\safeset}$ in Equation \eqref{eq:reachapproximatelyrecoveredsafeset} satisfies:
\begin{equation}\label{eq:guarantee}
\begin{aligned}
\small{
\underset{x \in \hat{\safeset}}{\mathbb{P}}(\vfunc(\state,0) \le 0) \le \epsilon}
\end{aligned}
\vspace{0.50em}\rlap{$\qquad \Box$}
\end{equation}
\end{theorem}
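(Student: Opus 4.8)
The plan is to split the guarantee into two essentially independent pieces: a deterministic reduction that replaces the unknown true-safety event $\{\vfunc(\state,0)\le 0\}$ by the checkable empirical-unsafety event $\{\costFunction(\state,0)\le 0\}$, and a probabilistic argument that bounds the measure of empirically unsafe states surviving inside $\hat{\safeset}$ via scenario optimization. I would first establish the reduction, and then invoke the standard scenario-optimization sample-complexity bound specialized to a single scalar decision variable, which is exactly what produces the prescribed $N$ in \eqref{eq:prescribedN}.

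For the reduction, the key observation -- the same policy-consistency idea underlying Lemma~\ref{lemma:recoveredsafeset} -- is that empirical safety under $\Tilde{\policy}$ implies true safety. Concretely, if $\costFunction(\state,0)>0$ then, by \eqref{eq:costfunctional}, the closed-loop trajectory generated by the induced policy $\Tilde{\policy}$ keeps $\targetfunc$ positive for all $\tdummy\in[0,\horizon]$, i.e.\ it never enters $\targetset$. Hence $\Tilde{\policy}$ itself witnesses a control signal that avoids $\targetset$ over $[0,\horizon]$, so by the definition of $\BRT$ in \eqref{eq:avoidBRT} we have $\state\notin\BRT$, and therefore $\vfunc(\state,0)>0$ by \eqref{eq:BRT_from_valfunc}. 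Taking the contrapositive gives the pointwise inclusion $\{\state:\vfunc(\state,0)\le 0\}\subseteq\{\state:\costFunction(\state,0)\le 0\}$, which in particular holds inside $\hat{\safeset}$. Consequently $\mathbb{P}_{\state\in\hat{\safeset}}(\vfunc(\state,0)\le 0)\le \mathbb{P}_{\state\in\hat{\safeset}}(\costFunction(\state,0)\le 0)$, so it suffices to bound the probability that a fresh sample drawn from $\hat{\safeset}$ is empirically unsafe.

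To bound this empirical-violation probability, I would view the terminating iteration of Algorithm~\ref{alg:scenarioOptimization} through the lens of scenario optimization with the single scalar decision variable $\gamma$ playing the role of $\delta$: the robust program minimizes $\gamma$ subject to $\gamma\ge\Tilde{\vfunc}(\state,0)$ for every empirically unsafe $\state$, so that a fresh sample $\state$ drawn from $\{\Tilde{\vfunc}(\cdot,0)>\gamma\}$ ``violates'' $\gamma$ precisely when $\costFunction(\state,0)\le 0$. At termination, the algorithm has drawn $N$ i.i.d.\ samples from $\hat{\safeset}=\{\Tilde{\vfunc}(\cdot,0)>\hat{\delta}\}$ and observed zero violations, so the returned level $\hat{\delta}$ is certified against the entire batch. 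I would then invoke the explicit Calafiore--Campi sample-complexity bound for convex scenario programs, which for a single decision variable ($d=1$) guarantees that the violation probability is at most $\epsilon$ with confidence at least $1-\beta$ as soon as $N\ge\tfrac{2}{\epsilon}(\ln\tfrac{1}{\beta}+d)$; setting $d=1$ reproduces exactly the bound in \eqref{eq:prescribedN}. This yields $\mathbb{P}_{\state\in\hat{\safeset}}(\costFunction(\state,0)\le 0)\le\epsilon$, and chaining with the reduction above gives $\mathbb{P}_{\state\in\hat{\safeset}}(\vfunc(\state,0)\le 0)\le\epsilon$, which is \eqref{eq:guarantee}.

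The main obstacle I anticipate is justifying the scenario-optimization hypotheses under the iterative, data-dependent sampling in Algorithm~\ref{alg:scenarioOptimization}: the region $\hat{\safeset}$ from which the terminating batch is drawn is itself a function of the samples accumulated in earlier iterations, so the i.i.d.\ scenario assumption must be applied conditionally rather than directly. I would handle this by conditioning on the history up to the start of the terminating iteration, which fixes $\hat{\delta}$ and hence $\hat{\safeset}$; conditioned on this history the terminating batch is genuinely $N$ i.i.d.\ draws from $\hat{\safeset}$ and is independent of $\hat{\delta}$, so the scenario bound holds conditionally and therefore unconditionally after averaging over the history -- this is precisely what the hypothesis ``suppose Algorithm~\ref{alg:scenarioOptimization} converges'' buys us. A secondary point to verify is that the program is genuinely one-dimensional, so that at most one support constraint is active (consistent with $\hat{\delta}$ being the learned value of a single worst-case sample), which is what licenses the $d=1$ constant in \eqref{eq:prescribedN}. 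The desirable-target (reach) case is symmetric: flipping the relevant inequalities and replacing maxima by minima as in Section~\ref{subsec:errormetric} leaves the argument unchanged.
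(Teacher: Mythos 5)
Your proposal is correct and takes essentially the same route as the paper: the paper likewise proves an appendix lemma by casting $\delta$ as the one-dimensional scenario program $\min g$ subject to $\bigl(\Tilde{\vfunc}(\state,0):\costFunction(\state,0)\le 0\bigr)\le g$ and invoking Campi's Theorem~1 with a single decision variable (yielding exactly $N\ge\tfrac{2}{\epsilon}(\ln\tfrac{1}{\beta}+1)$), justified by the observation that the terminating iteration draws $N$ i.i.d.\ samples from $\{\Tilde{\vfunc}(\cdot,0)>\hat{\delta}\}$ without further updating $\hat{\delta}$. Your reduction from true safety to empirical safety via the BRT-witness argument is the same fact the paper uses in the form $\costFunction(\state,\tvar)\le\vfunc(\state,\tvar)$, so the two proofs differ only in presentation order.
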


In practice, we find that the algorithm converges within 3-4 iterations.
Intuitively, the higher the quality of the value function approximation, the faster the convergence of $\delta_i$.

Disregarding the confidence parameter $\beta$ for a moment, Theorem \ref{theorem:scenarioOptimization} states that the volume of the unsafe states within $\hat{\safeset}$ is smaller than or equal to the prescribed $\epsilon$ value, as long as we sample enough states to satisfy Inequality \eqref{eq:prescribedN} during the computation of $\hat{\delta}$. 
As $\epsilon$ approaches $0$, the number of safety violations in the recovered safe set also approaches $0$. In
turn, the simulation effort grows unbounded since $N$ is inversely
proportional to $\epsilon$.

To interpret the confidence parameter $\beta$, note that $\hat{\safeset}$ is a random variable that depends on a randomly sampled set of initial states. It may be the case that we just happen to draw a poorly representative sample, in which case the $\epsilon$ bound does not hold. $\beta$ controls the probability of this adverse event happening, which regards the correctness of the final guarantee in Equation \eqref{eq:guarantee}. Fortunately, $N$ only grows logarithmically with $\frac{1}{\beta}$, so $\beta$ can be chosen to be an extremely small value such as $10^{-12}$. $1-\beta$ should then be so close to $1$ that it does not have any practical importance.

When $\targetset$ represents a set of desirable states, we initialize $\delta_0$ to be $\infty$ instead of $-\infty$, flip the inequalities, and take a minimum instead of a maximum in Algorithm \ref{alg:scenarioOptimization}. We flip the value inequalities in Equation \eqref{eq:reachapproximatelyrecoveredsafeset} and Equation \eqref{eq:guarantee}.
\vspace{0.2em}
\begin{remark}
Note that when Algorithm \ref{alg:scenarioOptimization} does not converge, scenario optimization can still be used to bound $\delta$. However, the safety guarantees are more involved. We defer a detailed investigation of these guarantees to future work.
\end{remark}

\noindent \textit{\textbf{Running example: Dubins3D Avoid.}} We now demonstrate the \approachName{} method on the Dubins3D Avoid running example introduced in Section \ref{sec:problem_setup}. We choose system parameters $v=0.6 m/s, u_{\min}=-1.1 rad/s, u_{\max}=1.1 rad/s, R=0.25m$, and use DeepReach to learn the system's value function. 
The used neural network parameters and architecture are the same as in the DeepReach paper \cite{bansal2021deepreach}, i.e., a DNN with 3 hidden layers and 512 neurons in each layer.

For this first example, we purposefully sabotage the training of DeepReach by permitting it to train for only 8,000 epochs, far less than $100$K epochs used in the DeepReach paper. 
We do this to illustrate our method's utility even with low-quality learned solutions. 
We apply our method to a well-trained DeepReach solution in Section \ref{sec:dubins3davoid}.

The overall training took less than 30 minutes on an NVIDIA GeForce RTX 3090Ti.
We execute our verification approach choosing $\beta = 10^{-16}, \epsilon = 10^{-3}$, resulting in an $N = 75683$ as per Theorem 1.
%
%
Thus, we will be $1-10^{-16}$ confident that at least $1-10^{-3}$ of our recovered set will be provably safe.
Algorithm \ref{alg:scenarioOptimization} converges in 4 iterations in this case, resulting in a $\hat{\delta} = 0.3728$.
Slices of the trained BRT (yellow), the recovered BRT (blue), and the ground truth BRT boundary (black) are shown in Figure \ref{fig:baddubinsavoid}. 
The ground truth is computed by a state-of-the-art PDE solver, \textit{Level Set Toolbox (LST)} \cite{LSToolbox}, which computes the value function over a discrete grid of size $101\times 101\times 101$. 

\begin{figure}[ht]
    \vspace{-0.65em}
    \centering
    \includegraphics[width=1 \columnwidth]{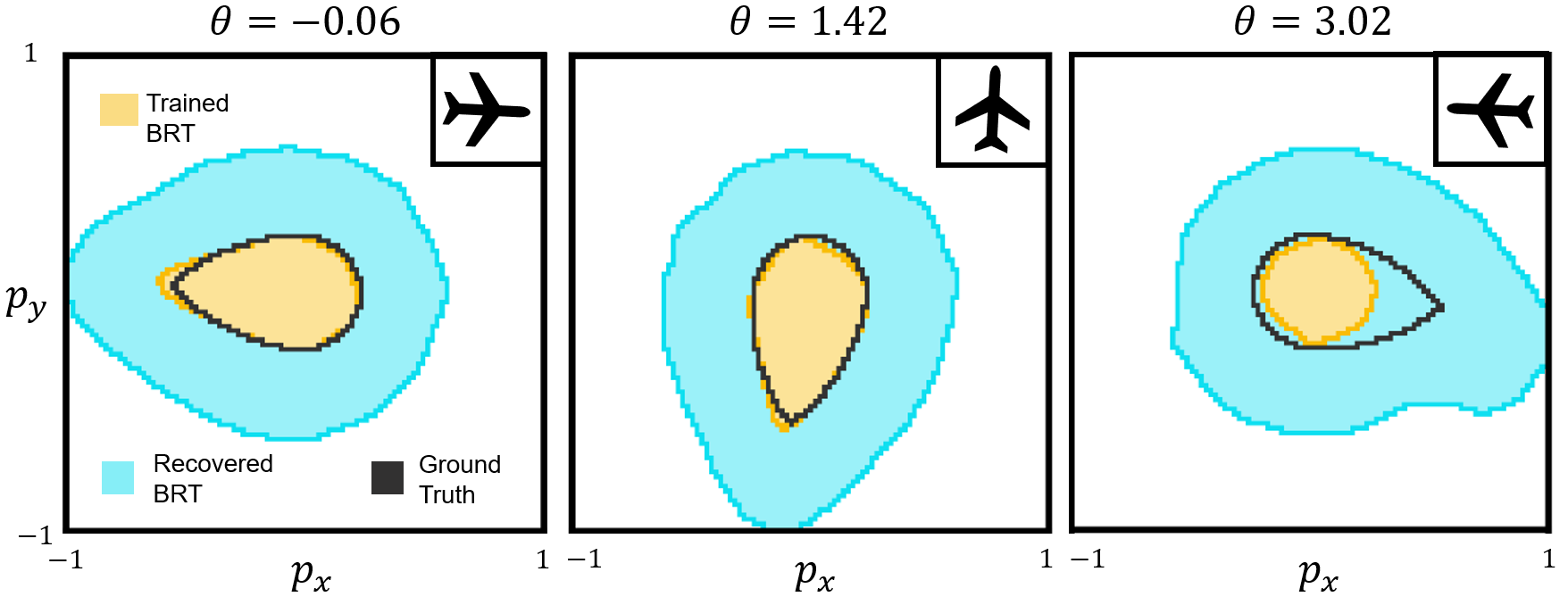}
    \vspace{-1.5 em}
    \caption{Dubins3D Avoid: Slices of the trained, recovered, and ground truth border BRTs for three values of $\theta$ from a sabotaged DeepReach solution. The recovered BRT completely encompasses the ground truth BRT and provides a probabilistically safe approximation of the safe set.}
    \vspace{-0.65em}
    \label{fig:baddubinsavoid}
\end{figure}

As evident from the figure, the recovered BRT slices completely encompass the true unsafe states (the blue region encompasses the states within the black boundary). 
Hence, the complement of the recovered BRT is safe, as we expect based on Theorem \ref{theorem:scenarioOptimization}. 
In the rightmost slice, observe that the trained BRT is smaller than the ground truth BRT and is thus unsafe as it is. 
Our verification approach correctly expands the trained BRT past the ground truth boundary, ensuring safety. 
Validation by sampling 1M states within the recovered safe set reveals a violation rate of $3\times 10^{-6} << \epsilon$.

Note that the trained solution's safety violations are concentrated near $\theta=\pi$. 
It is likely these errors which are responsible for the large expansion of the recovered BRT even for slices that are almost accurate (left and middle slices in Figure \ref{fig:baddubinsavoid}).
This is because our method expands the BRT uniformly.
However, too much expansion is undesirable, since we want to recover as much of the safe set as possible. 

One way to overcome this challenge is to apply refined error correction to $\Tilde{\vfunc}(\state,\tvar)$ by independently verifying separate regions of the states-space (i.e., separately consider each region as our $X$).
This allows us to recover sets with the same safety guarantee but expanded differently depending on their own region's error. 
However, this method's efficacy depends heavily on the choice of sub-regions, which are hard to prespecify, especially for high-dimensional systems.
Instead, we can resort to a data-driven approach.

Specifically, we trained a simple multilayer perceptron (MLP) on 1M randomly sampled initial states and their empirical safety violation costs (the training took an hour on a standard GPU). 
Even though the predictor may be inaccurate (after all, predicting the safe set is the original challenging problem), we hypothesize that it will have learned something useful enough about the general distribution of errors throughout the state space. 
We divided the output of the MLP in 10 bins, each corresponding to a different range of empirical cost.
Thus, we expect the last bin to correspond to the states with maximal safety violations (in this case, that will correspond to states with $\theta=\pi$).
We next ran Algorithm \ref{alg:scenarioOptimization} independently for each bin to compute an error correction.
The corresponding results are shown in Figure \ref{fig:binneddubinsavoid}.
As evident, we get a much tighter approximation of the safe set near the slices that have small errors, allowing us to recover a much bigger safe set overall.
In this case, the recovered safe set volume increased by 35\%. 

%
\begin{figure}[ht]
    \vspace{-0.65em}
    \centering
    \includegraphics[width=1 \columnwidth]{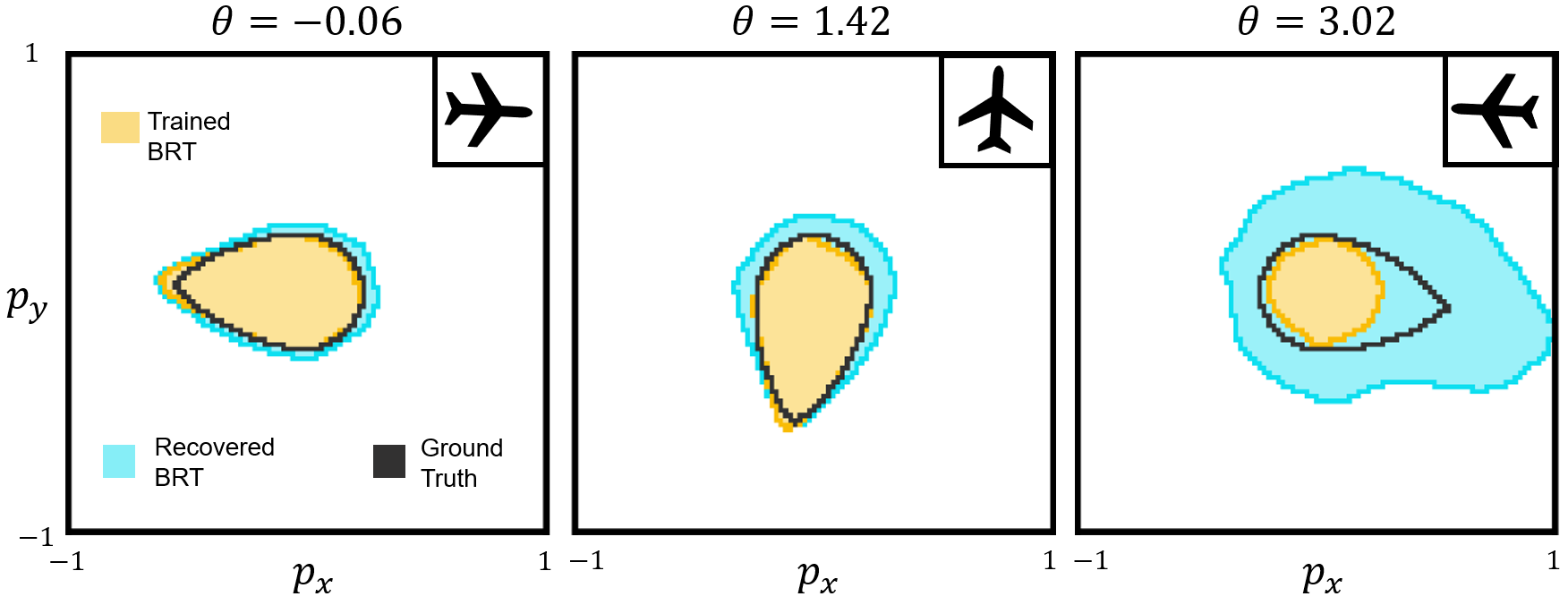}
    \vspace{-1.5em}
    \caption{Dubins3D Avoid: Slices of the trained, recovered, and ground truth border BRTs for three values of $\theta$ from a sabotaged DeepReach solution when recovery is refined by MLP binning. The recovered BRT completely encompasses the ground truth BRT and provides a probabilistically safe approximation of the safe set that is larger than when binning is not used.}
    \vspace{-0.65em}
    \label{fig:binneddubinsavoid}
\end{figure}

As discussed earlier, the performance improvement of this approach depends heavily on the selection of subregions.
We defer this to future work, and for the rest of this paper, we only focus on uniform value function correction.
\section{Case Studies} \label{sec:examples}
We will evaluate our approach on various reachability problems.
First, we show results for the avoid version and the reach version of the low-dimensional Dubins3D system for which we have a ground truth BRT available for comparison. 
Next, we recover safe sets for a 9D multivehicle collision avoidance problem and for a 6D rocket landing problem, with which traditional methods struggle.
Like our running example, we use $\beta = 10^{-16}, \epsilon = 10^{-3}, N = 75683$.
%

\subsection{Dubins3D Avoid} \label{sec:dubins3davoid}
In Figure \ref{fig:dubinsavoid}, we show the results of applying our method to a well-trained DeepReach solution for the same Dubins3D Avoid system as in the running example in Section \ref{sec:approach}.
The solution is trained to consider the periodicity of $\theta$ and for 100K, instead of 8K, epochs.
The proposed algorithm results in a very small $\hat{\delta} = -0.0016$. Interestingly, the fact that $\hat{\delta} < 0$ indicates that the learned value function is conservative. Our method shrinks the trained BRT to recover a larger safe set.
Validation by sampling 1M states within the recovered safe set reveals a violation rate of $3.2\times 10^{-5} << \epsilon$. 

The fact that a much smaller error correction is found for the trained solution shown in Figure \ref{fig:dubinsavoid} than the one in Figure \ref{fig:baddubinsavoid} indicates it is of a much higher quality.
This illustrates how the error correction metric $\delta$ can also be used to evaluate the quality of different approximate value function solutions.
For example, we can evaluate the relative performance of different DNN hyperparameters, which is especially helpful when the ground truth BRT is not available for comparison.  

\begin{figure}[ht]
    \vspace{-0.65em}
    \centering
    \includegraphics[width=1 \columnwidth]{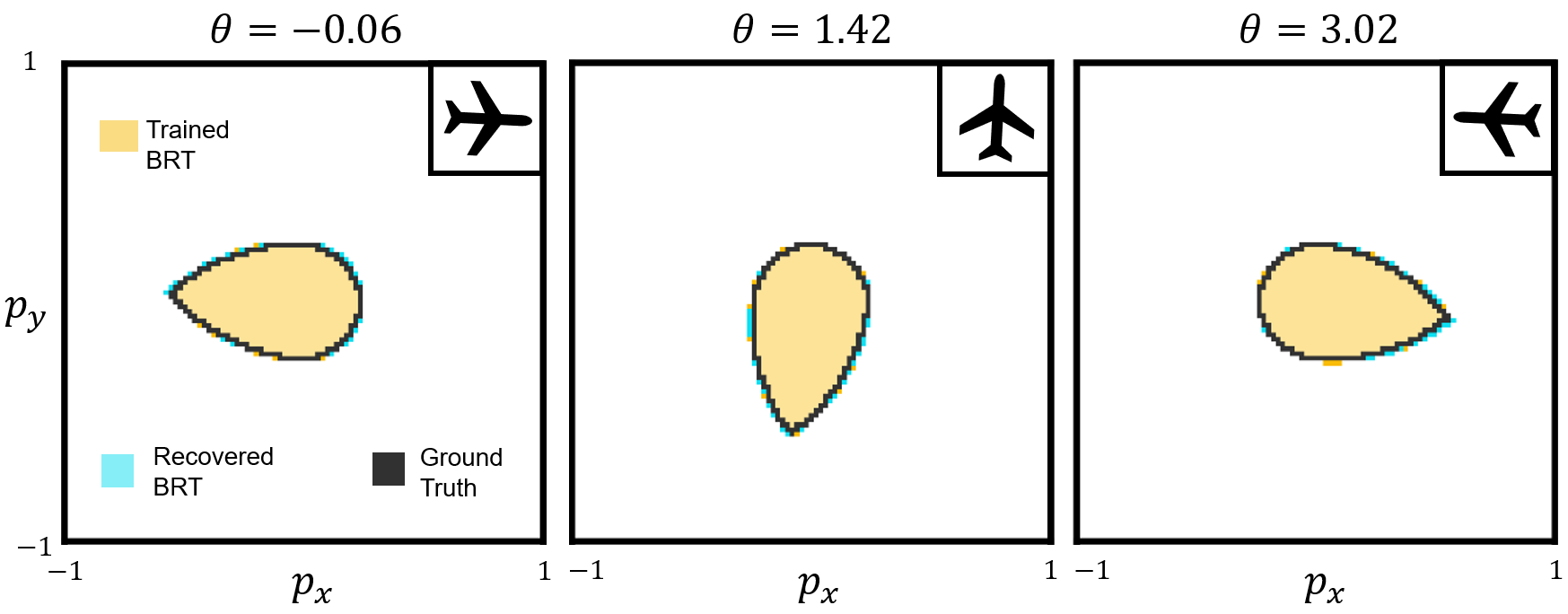}
    \vspace{-1.5em}
    \caption{Dubins3D Avoid: Slices of the trained, recovered, and ground truth border BRTs for three values of $\theta$ from a well-trained DeepReach solution. The trained BRT completely encompasses the recovered BRT, so we plot the recovered BRT border on top for visualization purposes.}
    \vspace{-0.65em}
    \label{fig:dubinsavoid}
\end{figure}

\subsection{Dubins3D Reach} \label{sec:dubins3dreach}
In Figure \ref{fig:dubinsreach}, we show the results of applying our method to a DeepReach solution trained with the same scheme as for the Dubins3D Avoid solution in Section \ref{sec:dubins3davoid}, but for the reach version. 
That is, $\targetset$ is now a set of desirable states, so a safe approximate BRT should be fully contained by the true BRT. 
The trained BRT is just barely crossing the ground truth boundary, and the proposed method cuts it down to just behind the boundary, ensuring system safety.
\begin{figure}[ht]
    \vspace{-0.65em}
    \centering
    \includegraphics[width=1 \columnwidth]{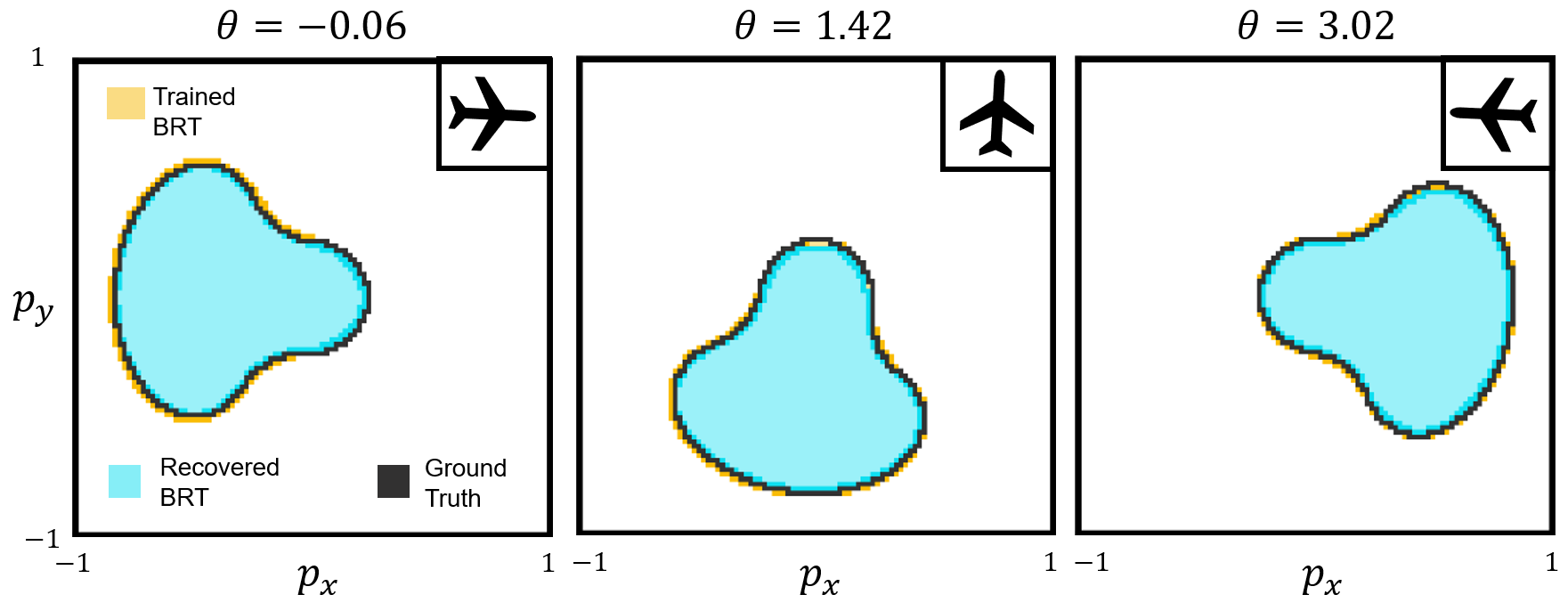}
    \vspace{-1.5em}
    \caption{Dubins3D Reach: Slices of the trained, recovered, and ground truth border BRTs for three values of $\theta$ from a well-trained DeepReach solution. The recovered BRT is correctly a subset of the ground truth BRT.}
    \vspace{-0.65em}
    \label{fig:dubinsreach}
\end{figure}

\subsection{Multivehicle Collision Avoidance} \label{sec:multicollision}
We now consider a 9D collision avoidance system involving 3 independent Dubins3D cars. 
The $i$th car has position $(p_{xi}, p_{yi})$, heading $\theta_i$, velocity $v$, and steering control $u_i \in [u_{\min}, u_{\max}]$. 
The dynamics of vehicle $i$ are given as:
\begin{align*}
    \dot{p}_{xi} = v\cos{\theta_i}, \quad \dot{p}_{yi} = v\sin{\theta_i}, \quad
    \dot{\theta_i} = u_i
\end{align*}
The (undesirable) target set is given by the states where any of the vehicle pairs is in collision:
\begin{equation*}
    \targetset = \{x: \min\{d(\veh_1, \veh_2), d(\veh_1, \veh_3), d(\veh_2, \veh_3)\} \le R\}
\end{equation*}
where $d(\veh_i, \veh_j)$ is the distance between cars $i, j$.
We choose $v=0.6, u_{\min}=-1.1, u_{\max}=1.1, R=0.25$ for our case study.
%
%
The results of applying our method to a DeepReach solution are shown in Figure \ref{fig:multi}.

%
\begin{figure}[ht]
    \vspace{-0.65em}
    \centering
    \includegraphics[width=1 \columnwidth]{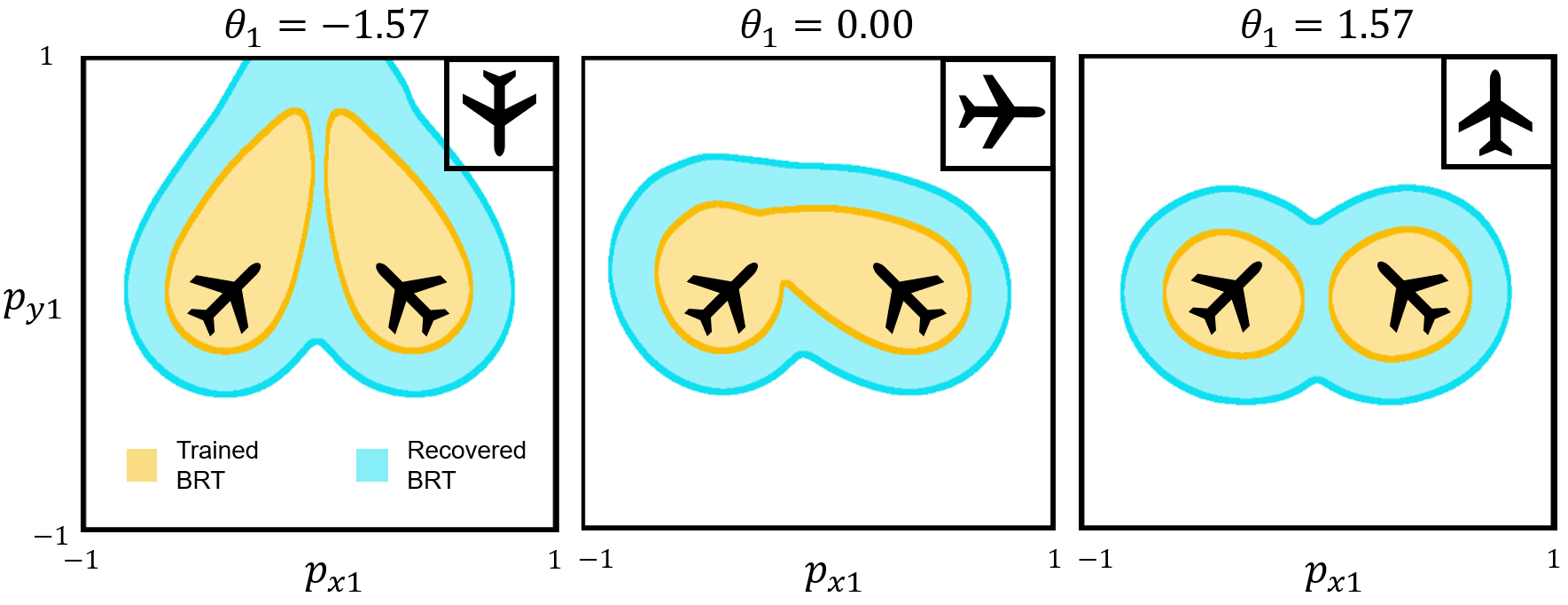}
    \vspace{-1.5em}
    \caption{Multivehicle Collision Avoidance: Slices of the trained and recovered BRTs for three values of $\theta_1$ from a DeepReach solution.}
    \vspace{-0.65em}
    \label{fig:multi}
\end{figure}

This high-dimensional example is typically difficult to compute with traditional methods, yet here we demonstrate successful recovery of a safe set with formal safety guarantees.
While the recovery preserves much of the safe set that is learned, a significant amount is pruned off. 
The percent reduction in safe set size (calculated from 1M samples) is much larger in this example (26\%) than for the Dubins3D solutions in Section \ref{sec:dubins3davoid} (0\%) and Section \ref{sec:dubins3dreach} (10\%), suggesting that the learned value function is more inaccurate for higher dimensional systems. 

To validate safety of the recovered BRT, in Figure \ref{fig:multihist}, we plot the minimum pairwise distance between vehicles along trajectories 
spawning from states sampled within the trained and recovered safe sets. 
Note that the trained safe set contains states which result in a collision, as shown by the mass to the left of the dotted line representing the collision distance. After verification, the recovered safe set shows no such mass. 
Validation by sampling 1M samples in the recovered safe set reveals a violation rate of $5\times 10^{-6} << \epsilon$.

\begin{figure}[ht]
    \vspace{0.5em}
    \centering
    \includegraphics[width=1 \columnwidth]{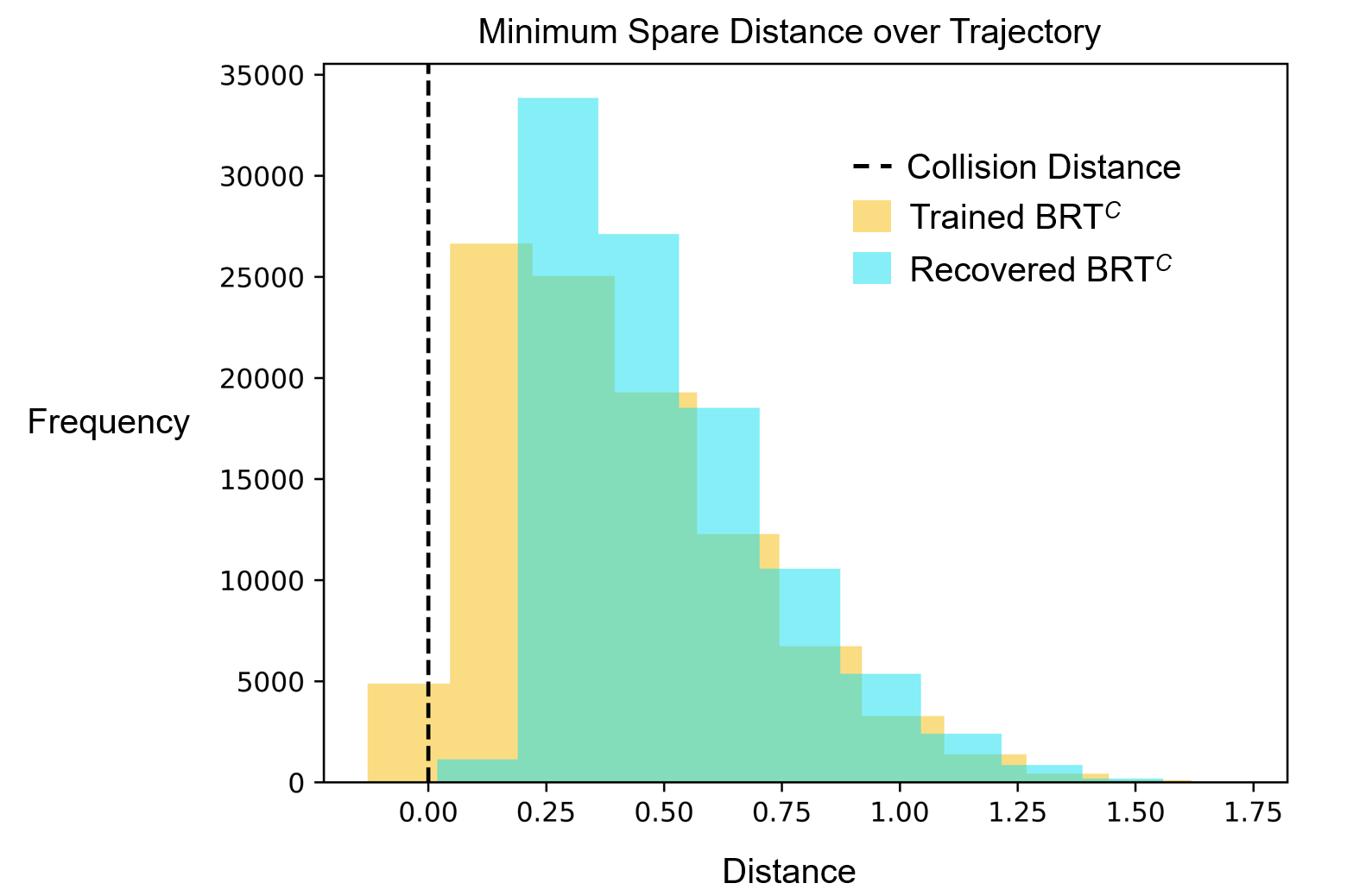}
    \vspace{-1.5em}
    \caption{Multivehicle Collision Avoidance: Histogram of the minimum pairwise spare distance between vehicles along trajectories which spawn from states sampled within the trained and recovered safe sets. The collision distance is indicated by the dotted line.}
    \vspace{0.25em}
    \label{fig:multihist}
\end{figure}

\subsection{Rocket Landing} \label{sec:rocketlanding}
Consider a 6D rocket landing system with position $(p_{x}, p_{y})$, heading $\theta$, velocity $(v_{x}, v_{y})$, angular velocity $\omega$, and torque controls $\tau_1, \tau_2\in [-250, 250]$. 

\noindent The dynamics are:
\begin{gather*}
    \dot{p_x} = v_x,~\dot{p_y} = v_y, ~\dot{\theta} = \omega,~\dot{\omega} = 0.3\tau_1,\\ \dot{v_x} = \tau_1 \cos{\theta} - \tau_2 \sin{\theta},~
    \dot{v_y} = \tau_1 \sin{\theta} + \tau_2 \cos{\theta} - g, 
\end{gather*}
where $g = 9.81$ is acceleration due to gravity.
The target set is the set of states where the rocket reaches a rectangular landing zone of side length 20 centered at the origin:
\begin{align*}
    \targetset = \{\state: |p_x| < 20.0, p_y < 20.0\}
\end{align*}

In Figure \ref{fig:rocket}, we show the results of applying our method to a DeepReach solution. 
We also plot the trajectories emanating from 20 randomly sampled initial states within the recovered safe set, demonstrating that they do indeed reach the target set (the green region) by following the induced policy. Validation by sampling 1M states within the recovered safe set reveals a violation rate of $5\times 10^{-6} << \epsilon$.

\begin{figure}[ht]
    \vspace{-0.65em}
    \centering
    \includegraphics[width=1 \columnwidth]{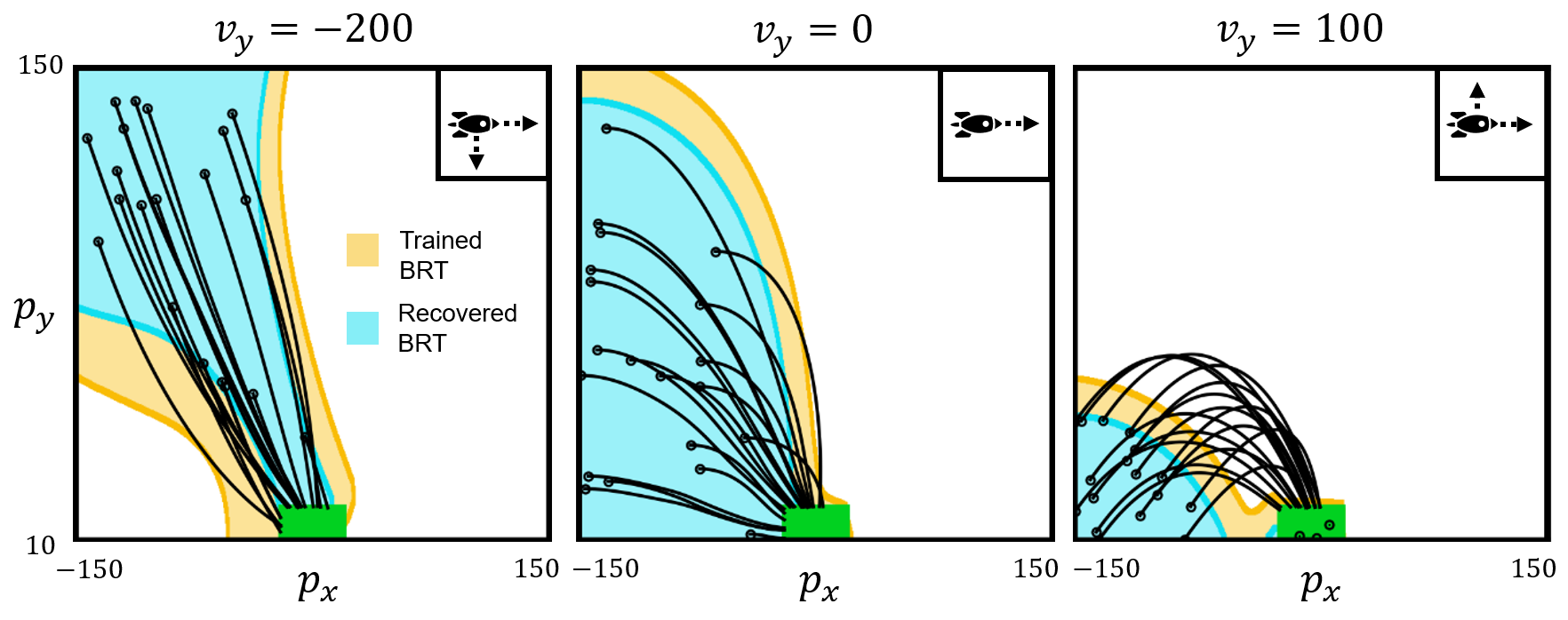}
    \vspace{-1.5em}
    \caption{Rocket Landing: Slices of the trained and recovered BRTs for three values of $v_y$ from a DeepReach solution. The system trajectories safely reach the target set (green) from the initial states within the recovered BRT.}
    \vspace{-0.65em}
    \label{fig:rocket}
\end{figure}
\section{Discussion and Future Work}
In this work, we present an approach to compute an error bound for DeepReach solutions to recover a provably safe approximation of the true reachable tube. We also propose a practical method to compute a probabilistic bound on this error correction that is not restricted to a specific class of systems.
This allows us to utilize the power of learning-based reachability methods to provide probabilistic safety assurances for high-dimensional dynamical systems.
We apply our method to obtain probabilistically safe reachable tubes for high-dimensional rocket-landing and multi-vehicle collision-avoidance problems which traditional methods struggle with. 

In the future, we will explore a more refined approach to error correction (as opposed to a uniform correction), as discussed briefly in the end of Section \ref{sec:approach}. 
Other directions include considering worst-case disturbances in the system dynamics and using the verification approaches proposed here for a targeted refinement of DeepReach solutions.

\appendices

%
%
\clearpage
\newpage
\bibliographystyle{IEEEtran}
\bibliography{initial_sub/Bib/reachability, initial_sub/Bib/bansal_papers, initial_sub/Bib/references,
initial_sub/Bib/opt_ctrl_and_dp}

\newpage
\section*{Appendix}
\subsection{Proof of Lemma \ref{lemma:recoveredsafeset}}\label{appendix:lemma_recoveredsafeset}
\begin{proof}
Assume for a contradiction that we compute $\Tilde{\safeset}$ as in Equation \eqref{eq:reachrecoveredsafeset} and that
\vspace{-0.25em}
\begin{equation}
\begin{aligned}
\small{
\Tilde{\safeset} \not\subseteq \BRT^C}
\end{aligned}
\vspace{-0.25em}
\end{equation}
which implies
\vspace{-0.25em}
\begin{equation}
\begin{aligned}
\small{
\exists \state^{'}: \state^{'} \in \Tilde{\safeset}, \state^{'} \not\in \BRT^C}
\end{aligned}
\vspace{-0.25em}
\end{equation}
$\state^{'} \in \Tilde{\safeset}$ implies that 
\vspace{-0.25em}
\begin{equation}
\begin{aligned}
\small{
\Tilde{V}(\state^{'}, 0) > \delta}
\end{aligned}
\vspace{-0.25em}
\end{equation}
by Equation \eqref{eq:reachrecoveredsafeset}. 

On the other hand, $\state^{'} \not\in \BRT^C$ implies that $V(\state^{'}, 0) \le 0$. 
We know that $\forall (\state,\tvar), \costFunction(\state,\tvar) \le V(\state, \tvar)$ from Equation \eqref{eq:valuefunc}. 
Thus, $\costFunction(\state^{'},0) \le V(\state^{'}, 0) \le 0$. 
Then 
\vspace{-0.25em}
\begin{equation}
\begin{aligned}
\small{
\Tilde{V}(\state^{'}, 0) \le \delta}
\end{aligned}
\vspace{-0.25em}
\end{equation}
by Equation \eqref{eq:reachSafetyMetric}.

We have found the contradiction that $\Tilde{V}(\state^{'}, 0) > \delta$ and $\Tilde{V}(\state^{'}, 0) \le \delta$.
Thus, our assumption must be false, and it must be true that if we compute $\Tilde{\safeset}$ as in Equation \eqref{eq:reachrecoveredsafeset}, then $\Tilde{\safeset} \subseteq \BRT^C$.

To prove the equivalent of Lemma \ref{lemma:recoveredsafeset} but in the case when $\targetset$ represents a set of desirable states, we use $\BRT$ instead of $\BRT^C$ and flip the value and cost inequalities.
\end{proof}

\subsection{Lemma \ref{lemma:scenarioOptimization}}\label{sec:lemma_scenarioOptimization}
\begin{lemma}
\label{lemma:scenarioOptimization}
Select a violation parameter $\epsilon \in (0, 1)$ and a confidence parameter $\beta \in (0, 1)$. Pick $N$ such that
\vspace{-0.25em}
\begin{equation}
\begin{aligned}
\small{
N \ge \frac{2}{\epsilon}\left(\ln{\frac{1}{\beta}}+1\right)}
\end{aligned}
\vspace{-0.25em}
\end{equation}
Then, with probability at least $1-\beta$, the solution $\hat{\delta}$ to Algorithm \ref{alg:scenarioOptimization} executed with $N$ satisfies the following condition.
\vspace{-0.25em}
\begin{equation} \label{eq:metricbound}
\begin{aligned}
\small{
\underset{\{\state: \state\in X, \Tilde{\vfunc}(\state,0) > \hat{\delta}\}}{\mathbb{P}}\left(\left(\Tilde{\vfunc}(\state,0): \costFunction(\state,0) \le 0\right) > \hat{\delta}\right) \le \epsilon}
\end{aligned}
\vspace{-0.25em}
\end{equation}
\end{lemma}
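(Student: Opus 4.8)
The plan is to recognize the computation of $\hat{\delta}$ as a \emph{scenario program} with a single scalar decision variable and to invoke the fundamental theorem of scenario optimization. First I would rewrite the exact metric $\delta$ from \eqref{eq:reachSafetyMetric} as the robust optimization problem
\begin{equation*}
\min_{\delta \in \R}\ \delta \quad \text{s.t.} \quad \Tilde{\vfunc}(\state,0) \le \delta \ \ \text{for every } \state \text{ with } \costFunction(\state,0) \le 0
\end{equation*}
whose (semi-infinite) solution is precisely the $\delta$ of \eqref{eq:reachSafetyMetric}. The scenario counterpart replaces the continuum of constraints by the $N$ sampled ones, and its optimizer is exactly the update $\max_{\state \in \mathcal{D}_i}\{\Tilde{\vfunc}(\state,0) : \costFunction(\state,0)\le 0\}$ computed on Line \ref{line:assign}. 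The structural fact that makes this route go through is that, although $\Tilde{\vfunc}$ and $\costFunction$ are arbitrarily nonconvex in $\state$, each constraint $\Tilde{\vfunc}(\state,0) - \delta \le 0$ is \emph{linear in the decision variable} $\delta$; the program is therefore convex with $d = 1$ decision variable, which is the only property the scenario machinery requires.

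Next I would identify the violation probability of a candidate $\hat{\delta}$. For a sampling distribution $\mu$ over states, the probability that a fresh sample violates the constraint associated with $\hat{\delta}$ is
\begin{equation*}
V(\hat{\delta}) = \underset{\state \sim \mu}{\mathbb{P}}\big(\costFunction(\state,0) \le 0 \ \text{and}\ \Tilde{\vfunc}(\state,0) > \hat{\delta}\big).
\end{equation*}
Taking $\mu$ to be uniform on $\{\state : \Tilde{\vfunc}(\state,0) > \hat{\delta}\}$ --- the distribution used in the terminating iteration of Algorithm \ref{alg:scenarioOptimization} --- the event $\Tilde{\vfunc}(\state,0) > \hat{\delta}$ is automatic, so $V(\hat{\delta})$ collapses to exactly the left-hand side of \eqref{eq:metricbound}. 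Bounding $V(\hat{\delta})$ by $\epsilon$ with confidence $1-\beta$ is therefore equivalent to the claim. I would then apply the scenario optimization theorem \cite{campi2009scenario, calafiore2006scenario}, which for a convex scenario program with $d$ decision variables guarantees $\mathbb{P}^N\{V > \epsilon\} \le \sum_{j=0}^{d-1}\binom{N}{j}\epsilon^j(1-\epsilon)^{N-j}$, together with the standard sufficient condition that this bound drops below $\beta$ whenever $N \ge \tfrac{2}{\epsilon}(\ln\tfrac{1}{\beta} + d)$. Specializing to $d=1$ reproduces the prescribed sample size $N \ge \tfrac{2}{\epsilon}(\ln\tfrac{1}{\beta}+1)$ and yields the $1-\beta$ guarantee in \eqref{eq:metricbound}.

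The main obstacle is that Algorithm \ref{alg:scenarioOptimization} is \emph{adaptive}: it resamples from the shrinking, data-dependent super-level sets $\{\Tilde{\vfunc} > \delta_i\}$, so the draws across iterations are neither identically distributed nor drawn from a single fixed distribution, as the scenario theorem assumes. The key step to resolve this is to condition on the returned value $\hat{\delta}$ and on the termination event. Conditioned on $\hat{\delta}=\theta$, the batch drawn in the final iteration is a genuine IID sample from the fixed distribution $\mu = \mathrm{Unif}(\{\Tilde{\vfunc} > \theta\})$, and it was drawn \emph{after} $\theta$ was fixed; the scenario bound therefore applies conditionally and holds uniformly in $\theta$, so it survives taking the expectation over the random $\hat{\delta}$. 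I would also note the minor caveats that the scenario theorem requires a non-degeneracy or tie-breaking convention (immediate for $d=1$), and that a direct binomial-tail argument on the zero-violation terminating round gives the same conclusion under the even weaker requirement $N \ge \tfrac{1}{\epsilon}\ln\tfrac{1}{\beta}$, confirming that the stated bound is safely sufficient.
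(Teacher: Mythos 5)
Your proposal is correct and takes essentially the same route as the paper's proof: both recast the computation of $\delta$ as a semi-infinite program that is linear (hence convex) in the single decision variable, invoke the scenario optimization theorem of \cite{campi2009scenario} with one decision variable to justify the sample size $N \ge \frac{2}{\epsilon}\left(\ln\frac{1}{\beta}+1\right)$, and handle the i.i.d.\ requirement by observing that the terminating iteration of Algorithm \ref{alg:scenarioOptimization} draws its $N$ samples uniformly from $\{\state : \state \in X, \Tilde{\vfunc}(\state,0) > \hat{\delta}\}$ with $\hat{\delta}$ already fixed and not updated in that final loop. Your added discussion of adaptivity, degeneracy, and the direct binomial-tail bound is more explicit than the paper's treatment but reinforces rather than alters the same argument.
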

\begin{proof}
Lemma \ref{lemma:scenarioOptimization} follows from Theorem 1 in Scenario Optimization \cite{campi2009scenario}. To use the theorem, we need to prove the following conditions:
\begin{enumerate}
    \item computing $\delta$ can be converted into a standard Scenario Optimization problem
    \item Algorithm \ref{alg:scenarioOptimization} obtains $\hat{\delta}$ by sampling i.i.d from the same space that $\delta$ is optimized over, $X$, but with a probability distribution that is uniform over $\{\state: \state\in X, \Tilde{\vfunc}(\state,0) > \hat{\delta}\}$
\end{enumerate}
$\delta$ can be formalized as the following optimization problem:
\begin{align*}
    &\min{g} \\
    &\text{s.t. } \forall \state\in X, \left(\Tilde{\vfunc}(\state,0): \costFunction(\state,0) \le 0\right) \le g
\end{align*}
This is a semi-infinite optimization problem where the constraints are linear, and thus convex, in the optimization variable $g$ for any given $\state$. Thus, Lemma \ref{lemma:scenarioOptimization} follows from Theorem 1 in Scenario Optimization \cite{campi2009scenario} by replacing $c$ by $1$, $\gamma$ by $g$, $\Delta$ by the state space of interest $X$, and $f$ by $\left(\Tilde{\vfunc}(\state,0): \costFunction(\state,0) \le 0\right) - g$. To use Theorem 1, we also require that i.i.d samples are chosen according to the uniform distribution over $\{\state: \state\in X, \Tilde{\vfunc}(\state,\tvar) > \hat{\delta}\}$. This can be proven by observing that, in Algorithm \ref{alg:scenarioOptimization}, the last iteration of the while loop samples $\state$ randomly and independently from $\{\state: \state\in X, \Tilde{\vfunc}(\state,\tvar) > \hat{\delta}\}$ where $\hat{\delta}$ is exactly the final metric returned. Importantly, the algorithm does not update $\hat{\delta}$ in the final loop (since the loop subsequently breaks).

To prove the equivalent of Lemma \ref{lemma:scenarioOptimization} but in the case when $\targetset$ represents a set of desirable states, we flip the value and cost inequalities. We also take a maximum instead of a minimum and flip the inequality with $g$ when we formalize $\delta$ as an optimization problem.
\end{proof}

\subsection{Proof of Theorem \ref{theorem:scenarioOptimization}}\label{appendix:theorem_scenopt}
\begin{proof}
We rewrite the LHS of Inequality \eqref{eq:metricbound} of Lemma \ref{lemma:scenarioOptimization} in Appendix\ref{sec:lemma_scenarioOptimization} by substituting in $\hat{\safeset}$ by Equation \eqref{eq:reachapproximatelyrecoveredsafeset}:
\vspace{-0.25em}
\begin{align*}
\underset{\{\state: \state\in X, \Tilde{\vfunc}(\state,0) > \hat{\delta}\}}{\mathbb{P}}&\left(\left(\Tilde{\vfunc}(\state,0): \costFunction(\state,0) \le 0\right) > \hat{\delta}\right) \\
= \underset{\{\state: \state\in X, \Tilde{\vfunc}(\state,0) > \hat{\delta}\}}{\mathbb{P}}&\left(\left(\Tilde{\vfunc}(\state,0) > \hat{\delta}\right) \cap \left(\costFunction(\state,0) \le 0\right)\right) \\
= \underset{\{\state: \state\in X, \Tilde{\vfunc}(\state,0) > \hat{\delta}\}}{\mathbb{P}}&\left(\costFunction(\state,0) \le 0\right) \\
= \underset{\state \in \hat{\safeset}}{\mathbb{P}}&\left(\costFunction(\state,0) \le 0\right)
\end{align*}
\vspace{-0.25em}
where the second-to-last line follows because the probability is over the set of states that are already given to have a learned value greater than $\hat{\delta}$.

We know that $\forall (\state,\tvar), \costFunction(\state,\tvar) \le V(\state, \tvar)$ from Equation \eqref{eq:valuefunc}. Thus, it follows that:
\vspace{-0.25em}
\begin{equation}
\begin{aligned}
\small{
\underset{\state \in \hat{\safeset}}{\mathbb{P}}\left(\vfunc(\state,0) \le 0\right) \le \underset{\state \in \hat{\safeset}}{\mathbb{P}}\left(\costFunction(\state,0) \le 0\right) \le \epsilon}
\end{aligned}
\vspace{-0.25em}
\end{equation}

To prove the equivalent of Theorem \ref{theorem:scenarioOptimization} but in the case when $\targetset$ represents a set of desirable states, we flip the value and cost inequalities.
\end{proof}

\end{document}